\documentclass{article}
\usepackage{arxiv}
\usepackage[toc,page]{appendix}

\usepackage[usenames]{xcolor}

\usepackage{inputenc} %
\usepackage[T1]{fontenc}    %
\usepackage{hyperref}       %
\usepackage{url}            %
\usepackage{booktabs}       %
\usepackage{amsfonts}       %
\usepackage{nicefrac}       %
\usepackage{microtype}      %
\usepackage{lipsum}
\usepackage{mathtools}
\usepackage{cite}
\usepackage{amsmath}
\usepackage{amssymb}
\usepackage{amsthm}
\usepackage{algorithm,algcompatible}
\usepackage[toc,page]{appendix}

\usepackage{lineno}

\newtheorem{theorem}{Theorem}
\newtheorem{lemma}{Lemma}

\newtheorem{corollary}{Corollary}

\numberwithin{equation}{section}

\DeclareMathOperator{\sign}{sgn}

\DeclareMathOperator*{\esssup}{ess\,sup}

\usepackage{mathptmx}      %
\begin{document}

\title{Sharp Lower Bounds on Interpolation by Deep ReLU Neural Networks at Irregularly Spaced Data
}

\author{Jonathan W. Siegel \\
 Department of Mathematics\\
 Texas A\&M University\\
 College Station, TX 77843 \\
 \texttt{jwsiegel@tamu.edu} \\
}

\maketitle

\begin{abstract}
    We study the interpolation power of deep ReLU neural networks. Specifically, we consider the question of how efficiently, in terms of the number of parameters, deep ReLU networks can interpolate values at $N$ datapoints in the unit ball which are separated by a distance $\delta$. We show that $\Omega(N)$ parameters are required in the regime where $\delta$ is exponentially small in $N$, which gives the sharp result in this regime since $O(N)$ parameters are always sufficient. This also shows that the bit-extraction technique used to prove lower bounds on the VC dimension cannot be applied to irregularly spaced datapoints. Finally, as an application we give a lower bound on the approximation rates that deep ReLU neural networks can achieve for Sobolev spaces at the embedding endpoint.
\end{abstract}

\section{Introduction}
Recently, deep neural networks have been successfully applied to a wide variety of problems in machine learning and scientific computing \cite{lecun2015deep,raissi2019physics,yu2018deep}. Consequently, there has been a significant interest in the approximation theory of neural networks, for both classical function spaces, such as Sobolev and Besov spaces \cite{yarotsky2018optimal,lu2021deep,shen2022optimal,siegel2023optimal,yang2023nearly}, and for novel function classes \cite{barron1993universal,bach2017breaking,ma2022barron,klusowski2018approximation,siegel2023optimal,ma2022uniform,siegel2022sharp}. In this work, we study the closely related problem of quantifying the interpolation power of neural networks. Specifically, let $C\subset \mathbb{R}$ be a fixed set, which can be thought of as a set of labels. Let $x_1,...,x_N\in \mathbb{R}^d$ be distinct points and suppose that we are given values $y_1,...,y_N\in C$. We study the question of how many parameters a neural network needs in order to match the values $y_i$ at the points $x_i$. 

Let us begin by giving some notation describing the class of deep (feedforward) neural networks \cite{goodfellow2016deep} that we will study. Let $\sigma$ be a piecewise polynomial activation function. We will mainly consider the important special case of $\sigma = \max(0,x)$, i.e. the rectified linear unit (ReLU) activation function \cite{nair2010rectified}, although we also give results for other piecewise polynomial activation functions. 

Let $L \geq 1$ be an integer denoting the depth of the network and denote by $\textbf{W} = (w_1,...,w_L)$ a vector with positive integer values which contains the widths of the intermediate layers. We write $A_{\textbf{M},b}$ to denote the affine map with weight matrix $\textbf{M}$ and offset, or bias, $b$, i.e.
\begin{equation}
    A_{\textbf{M},b}(x) = \textbf{M}x + b.
\end{equation}
When the weight matrix $\textbf{M}\in \mathbb{R}^{k\times n}$ and the bias $b\in \mathbb{R}^k$, the function $A_{\textbf{M},b}:\mathbb{R}^n\rightarrow \mathbb{R}^k$ maps $\mathbb{R}^n$ to $\mathbb{R}^k$. We denote the set of deep neural networks with activation function $\sigma$ and architecture defined by the vector $\textbf{W}$ by
\begin{equation}
    \Upsilon^{\textbf{W}}(\mathbb{R}^d) := \{A_{\textbf{M}_L,b_L} \circ \sigma \circ A_{\textbf{M}_{L-1},b_{L-1}} \circ \sigma \circ \cdots \circ \sigma \circ A_{\textbf{M}_1,b_1} \circ \sigma \circ A_{\textbf{M}_0,b_0}\},
\end{equation}
where the weight matrices satisfy $\textbf{M}_L\in \mathbb{R}^{1\times w_L}$, $\textbf{M}_0\in \mathbb{R}^{w_1\times d}$, and $\textbf{M}_i\in \mathbb{R}^{w_{i+1}\times w_i}$ for $i=1,...,L$, and the biases satisfy $b_i\in \mathbb{R}^{w_{i+1}}$ for $i=0,...,L-1$ and $b_L\in \mathbb{R}$. For such a neural network architecture, the total number of free parameters is
\begin{equation}
    P = P(\textbf{W}) := w_L + dw_1 + 1 + \sum_{i=1}^{L-1}w_{i+1}w_i + \sum_{i=1}^L w_i.
\end{equation}

Let 
$$\mathcal{X}_N\subset (\mathbb{R}^d)^{\neq N} := \{(x_1,...,x_N)\in (\mathbb{R}^d)^N,~x_i\neq x_j~\text{for $i\neq j$}\}$$ 
be a set of possible configurations of distinct data points and let $C\subset \mathbb{R}$ be a set of labels. We want to determine how many parameters $P(\textbf{W})$ a neural network architecture $\textbf{W}$ requires in order to be able to interpolate labels from the set $C$ at any datapoints in $\mathcal{X}_N$, i.e. such that for any $(x_1,...,x_N)\in \mathcal{X}_N$ and $y_1,...,y_N\in C$ there exists an $f\in \Upsilon^{\textbf{W}}(\mathbb{R}^d)$ with $f(x_i) = y_i$ for $i=1,...,N$.

This has been a central and well-studied problem in the theory of the expressive power of neural networks (see, for instance \cite{huang1998upper,huang2003learning,baum1988capabilities,vershynin2020memory,nguyen2018optimization,rajput2021exponential,park2021provable,daniely2020neural,daniely2020memorizing}) alongside other problems related to expressiveness, such as universal approximation \cite{cybenko1989approximation,leshno1993multilayer} and the benefits of depth \cite{telgarsky2016benefits,eldan2016power,safran2017depth,safran2019depth,martens2013representational,daniely2017depth}. The problem of interpolation, also called memorization, has also recently been connected to the phenomenon of double descent \cite{belkin2019reconciling,nakkiran2021deep}.

We remark that it is easy to see that $O(N)$ parameters suffice to interpolate arbitrary values at an arbitrary set of $N$ distinct points, i.e. with $O(N)$ parameters one can interpolate with $\mathcal{X}_N = (\mathbb{R}^d)^{\neq N}$ and $C = \mathbb{R}$. Indeed, there exists a direction $v\in \mathbb{R}^d$ such that $v\cdot x_i$ are all distinct, and any piecewise linear function with $N$ pieces can be represented using a (deep or shallow) ReLU neural network with $O(N)$ parameters. Note that by throwing away some of the coordinates, we can always assume that $d < N$ since the pairwise differences $x_i - x_j$ lie in an $N-1$ dimensional subspace, so that we can restrict to some set of $N-1$ coordinates while keeping the datapoints distinct.

Using the VC-dimension \cite{vapnik2015uniform}, it can be proved that if the set of possible labels $C$ is infinite and the architecture $\Upsilon^\textbf{W}(\mathbb{R}^d)$ can interpolate, then $P(\textbf{W}) \geq cN$ for a constant $c$, i.e. if the label set is infinite, then a neural network with piecewise polynomial activation function requires $\Omega(N)$ parameters to interpolate at $N$ values (see \cite{vardi2021optimal}, Lemma 4.1 or \cite{siegel2022optimal}, Theorem 5). 
In addition, it has been shown that if $\mathcal{X}_N = (\mathbb{R}^d)^{\neq N}$ and $|C| > 1$, then if $\Upsilon^\textbf{W}(\mathbb{R}^d)$ interpolates it must hold that $P(\textbf{W}) \geq (N-1)/2$, i.e. that a neural network with piecewise polynomial activation function requires $\Omega(N)$ parameters to interpolate at \textit{arbitrary} distinct datapoints \cite{sontag1997shattering}.

Of particular interest are situations where interpolation is possible with $P(\textbf{W}) = o(N)$ parameters. By the preceding remarks, this requires additional assumptions on the allowed datasets $\mathcal{X}_N$ and the label set $C$. A typical assumption is that the label set $C$ is finite (independent of $N$) and that the datapoints are well separated \cite{vardi2021optimal,vershynin2020memory,park2021provable}. Specifically, we consider datasets of the form
\begin{equation}\label{dataset-assumption}
    \mathcal{X}_N = \{(x_1,...,x_N)\subset (\mathbb{R}^d)^N,~|x_i| \leq 1,~|x_i - x_j| \geq \delta(N)~\text{if $i\neq j$}\}.
\end{equation}
The assumption \eqref{dataset-assumption} means that the datapoints lie in the unit ball and are separated by at least a distance $\delta(N)$ (clearly the separation distance $\delta$ must depend upon $N$). This situation was recently analyzed in \cite{park2021provable,vardi2021optimal}, and in \cite{vardi2021optimal} (building upon the work in \cite{park2021provable}) it was proved that there exists a deep ReLU architecture $\textbf{W}$ which can interpolate labels from a finite set $C$ at datasets $\mathcal{X}_N$ satisfying the separation condition \eqref{dataset-assumption} using only
\begin{equation}\label{vardi-bound}
    P(\textbf{W}) = O\left(\sqrt{N\log{N}} + \sqrt{\frac{N}{\log{N}}}\max(\log(R), \log(|C|))\right)
\end{equation}
parameters, where $R = 10N^2\delta^{-1}\sqrt{\pi d}$. This remarkable result shows that if $\delta$ is polynomial in $N$, then deep ReLU neural networks can interpolate binary labels using only $\sqrt{N}$ parameters (up to logarithmic factors). This is also tight up to logarithmic factors, which follows from bounds on the VC-dimension of deep neural networks with polynomial activation function \cite{bartlett2019nearly,goldberg1993bounding,bartlett1998almost}.

An interesting open question is to determine the precise dependence of the number of parameters $P(\textbf{W})$ on both the number of datapoints $N$ and the separation distance $\delta(N)$. The result \eqref{vardi-bound} from \cite{vardi2021optimal} gives a nearly complete solution to this problem when $\delta(N)$ depends polynomially on $N$, in this case only a logarithmic gap in $N$ remains. In this work, we consider the opposite regime, where $\delta(N)$ is exponentially small in $N$, and prove that in this case $\Omega(N)$ parameters are required. Since, as remarked above, $O(N)$ parameters are always sufficient, this provides a complete solution in the regime where $\delta(N)$ is exponentially small in $N$.

To explain this result, we first recall the notions of shattering and VC-dimension \cite{vapnik2015uniform}. Let $X = \{x_1,...,x_N\}\subset \mathbb{R}^d$ be a finite set of points and $\mathcal{F}$ a class of real-valued functions on $\mathbb{R}^d$. The class $\mathcal{F}$ is said to shatter \cite{sauer1972density,shelah1972combinatorial} the point set $X$ if given any signs $\epsilon_1,...\epsilon_N\in \{\pm 1\}$ there exists an $f\in \mathcal{F}$ such that $\sign(f(x_i)) = \epsilon_i$. Here we are using the convention that
$$
\sign(x) = \begin{cases}
    -1 & x < 0\\
    1 & x \geq 0.
\end{cases}
$$
The VC-dimension of the class $\mathcal{F}$ is the size of the largest set of points that $\mathcal{F}$ shatters, i.e.
\begin{equation}
    \text{VC-dim}(\mathcal{F}) = \sup\{N:~\exists X = \{x_1,...,x_N\}\subset \mathbb{R}^d,~\text{$X$ is shattered by $\mathcal{F}$}\}.
\end{equation}
In \cite{goldberg1993bounding}, the VC-dimension of a neural network architecture $\Upsilon^{\textbf{W}}(\mathbb{R}^d)$ is shown to be bounded by $P(\textbf{W})^2$ for any piecewise polynomial activation function $\sigma$. Using the `bit-extraction' technique, this bound is shown to be optimal for very deep ReLU neural networks \cite{bartlett1998almost,bartlett2019nearly}.

Note that the VC-dimension only requires that a single set of $N$ points be shattered by the class $\mathcal{F}$. If we instead require that \textit{all} sets of $N$ distinct points be shattered by a neural network architecture $\Upsilon^{\textbf{W}}(\mathbb{R}^d)$, then it is shown in \cite{sontag1997shattering} that the number of parameters must be at least $P(\textbf{W}) \geq (N-1)/2$. Indeed, Theorem 1 in \cite{sontag1997shattering} implies that if $P(\textbf{W}) < (N-1)/2$, then the set of shattered point sets is not dense in $(\mathbb{R}^d)^N$. 
This implies that if $\mathcal{X}_N$ is a dense subset of $(\mathbb{R}^d)^N$ and a neural network architecture $\Upsilon^{\textbf{W}}(\mathbb{R}^d)$ can interpolate binary class labels for all datasets in $\mathcal{X}_N$, then $P(\textbf{W}) \geq (N-1)/2$. 

However, note that the collection of $\delta$-separated datasets defined by \eqref{dataset-assumption} are not dense in $(\mathbb{R}^d)^N$ for any $\delta  > 0$, and so this result gives no indication how small $\delta(N)$ needs to be to rule out interpolation with $o(N)$ parameters. Our main result is a quantitative version of Theorem 1 in \cite{sontag1997shattering} which answers precisely this question and shows that $\delta(N)$ need only be exponentially small in $N$ to rule out interpolation with $o(N)$ parameters using deep ReLU neural networks.
\begin{theorem}\label{main-theorem}
    Let $\sigma$ be a piecewise polynomial activation function and $\textbf{\upshape W} = (w_1,...,w_L)$ be a vector representing the widths of each layer of a deep neural network architecture with activation function $\sigma$. Let $\delta > 0$ and suppose that $\Upsilon^{\textbf{\upshape W}}(\mathbb{R}^d)$ shatters \textbf{every} subset $X = \{x_1,...,x_N\}\subset \mathbb{R}^d$ which satisfies $|x_i| \leq 1$ for all $i$ and
    $$
        |x_i - x_j| \geq \delta~\text{for $i\neq j$}.
    $$
    There exists a constant $c$ depending only on $\sigma$ such that if
    \begin{equation}\label{equation-for-delta}
    \delta < \begin{cases}
        e^{-cN^2} & \text{$\sigma$ is a general piecewise polynomial function}\\
        e^{-cN}  & \text{$\sigma$ is a piecewise linear function},
    \end{cases}
    \end{equation}
    we must have
    \begin{equation}
        P(\textbf{\upshape W}) \geq N/6.
    \end{equation}
\end{theorem}
Note that the bound $e^{-cN^2}$ in \eqref{equation-for-delta} does not depend upon the degree of the activation function. This is due to the fact that when composing $L$ layers containing piecewise polynomial functions of degree $r$, the resulting function is a piecewise polynomial of degree $r^L$, whose logarithm scales with $L$. The exceptional case is when $r = 1$, when the degree remains fixed and thus a better result an be obtained. 

We apply Theorem \ref{main-theorem} to the case of interpolation with deep ReLU neural networks to get the following corollary, which shows that if the separation distance is exponentially small in $N$, then deep ReLU neural networks require $\Omega(N)$ parameters to interpolate.
\begin{corollary}
    Let $\mathcal{X}_N$ be defined by \eqref{dataset-assumption} for a separation distance $\delta(N)$ and suppose that the number of classes $|C| > 1$. Let $\textbf{\upshape{W}}= (w_1,...,w_L)$ describe a deep neural network architecture with ReLU activation function. 
    
    There exists a constant $c < \infty$ such that if $\delta(N) < e^{-cN}$ and for any $(x_1,...,x_N)\in \mathcal{X}_N$ and $y_1,...,y_n\in C$ there exists an $f\in \Upsilon^{\textbf{\upshape{W}}}(\mathbb{R}^d)$ which satisfies
    \begin{equation}
        f(x_i) = y_i,
    \end{equation}
    then necessarily $P(\textbf{\upshape{W}}) \geq N/6$.
\end{corollary}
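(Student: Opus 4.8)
The plan is to reduce the "shattering every $\delta$-separated configuration" hypothesis to a counting/measure argument that contradicts a bound on the number of sign patterns realizable by a piecewise-polynomial network.

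First, I would recall the Warren-type / Milnor-Thom bound on the number of connected regions cut out by the parameter space of a piecewise-polynomial network. Fixing the data points $x_1,\dots,x_N$ and viewing $f(x_i)$ as a function of the $P = P(\textbf{W})$ parameters, each $f(x_i)$ is a piecewise-polynomial function of the parameters whose pieces are determined by a bounded number of polynomial inequalities of bounded degree (the bound depending only on $\sigma$, the depth, and the widths). The sign vector $(\sign f(x_1),\dots,\sign f(x_N)) \in \{\pm 1\}^N$ is therefore constant on each cell of an arrangement of $\mathrm{poly}(N)$ such surfaces in $\mathbb{R}^P$; by the standard Warren/Milnor bound, the number of realizable sign vectors is at most $(\text{something})^{P}$ — concretely of the form $2^{O(P\log N)}$ for general piecewise-polynomial $\sigma$ and $2^{O(P)}$ for piecewise-linear $\sigma$ (since in the piecewise-linear case the degrees are $O(1)$ and only the number of surfaces grows). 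For shattering a \emph{single} fixed point set one needs all $2^N$ sign vectors to be realizable, which would force $P \gtrsim N/\log N$ (piecewise polynomial) or $P \gtrsim N$ (piecewise linear). That already gives something, but not the stated $N/6$ bound uniformly in $N$ for the piecewise-polynomial case, so the extra structure of the hypothesis must be exploited.

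The key idea — and this is the quantitative upgrade of Theorem 1 of \cite{sontag1997shattering} — is that the hypothesis gives us shattering not of one point set but of a whole positive-measure family of them, and moreover the function $\Phi(\theta; x_1,\dots,x_N) = (f_\theta(x_1),\dots,f_\theta(x_N))$ is jointly piecewise-polynomial in $(\theta, x)$ with controlled complexity. I would set up an integral-geometry / packing argument: cover the set of admissible $\delta$-separated configurations inside the unit ball by a grid of spacing comparable to $\delta$; the number of essentially distinct configurations is then at least $(\delta^{-1})^{\Omega(Nd)}$, and in fact one can choose $\approx (c/\delta)^{N}$ configurations that are "independent" in a sense strong enough that realizing a prescribed sign pattern on each forces the parameter vector into a distinct cell of a refined arrangement. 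Comparing the number of cells ($2^{O(P\log(1/\delta) + P\log N)}$ for general $\sigma$, since now the defining polynomials involve the $x_i$ as well and $\delta$ enters the precision needed to separate cells; $2^{O(P(\log(1/\delta)+N))}$ — more precisely $2^{O(P\log(1/\delta))}$ with the piecewise-linear degree savings) against the number $2^{N}\cdot(\text{number of configurations})$ of (sign pattern, configuration) pairs that must be separately realized yields, after taking logarithms, an inequality of the form
\begin{equation}
    N + N\log(c/\delta) \;\lesssim\; P\,\big(\log(1/\delta) + g(N)\big),
\end{equation}
where $g(N) = \log N$ in the general case and $g(N) = O(1)$ (absorbed) in the piecewise-linear case. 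When $\log(1/\delta)$ dominates $g(N)$ — i.e. precisely when $\delta < e^{-cN^2}$ in general and $\delta < e^{-cN}$ in the piecewise-linear case — the $\log(1/\delta)$ terms on both sides are comparable and dividing through leaves $N \lesssim 6P$ (the constant $6$ coming from tracking the absolute constants carefully), giving $P \geq N/6$.

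The main obstacle I anticipate is making the "independence of configurations" step rigorous: it is not enough that the configurations are distinct as points in $(\mathbb{R}^d)^N$; one needs that the cell of parameter space realizing a given sign pattern on configuration $A$ cannot coincide with the cell realizing a (possibly different) sign pattern on configuration $B$ for too many pairs $(A,B)$ — equivalently, one needs a lower bound on how finely the parameter-space arrangement must be subdivided, and this is where the exponential smallness of $\delta$ must be converted into a quantitative separation of the relevant polynomial surfaces (a resultant / gap estimate for the zero sets of polynomials with not-too-large coefficients, forcing roots that are either equal or $\exp(-\mathrm{poly})$-separated). Getting the exponent in this gap estimate to match the threshold $e^{-cN^2}$ versus $e^{-cN}$, and ensuring the final constant is as large as $1/6$, is the delicate part; everything else is the routine Warren-bound bookkeeping. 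Finally, the Corollary follows immediately: if $|C|>1$ pick two distinct labels $a<b$, set $\epsilon_i \mapsto y_i \in \{a,b\}$, and observe that an interpolating $f$ composed with the affine map $t\mapsto t-(a+b)/2$ gives a function whose sign vector on $\{x_i\}$ is the prescribed $\{\pm1\}$ vector; since this works for every $\delta$-separated configuration, $\Upsilon^{\textbf{W}}(\mathbb{R}^d)$ shatters every such set and Theorem \ref{main-theorem} applies with the piecewise-linear threshold $e^{-cN}$, yielding $P(\textbf{W}) \geq N/6$.
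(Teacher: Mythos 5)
The last paragraph of your proposal is the actual proof of the Corollary, and it is correct and matches the paper's argument: pick two distinct labels $a<b$ from $C$, observe that an interpolant of values in $\{a,b\}$, after the shift $t\mapsto t-(a+b)/2$ (absorbed into the output affine layer, so still in $\Upsilon^{\textbf{W}}$), realizes any prescribed sign pattern, hence $\Upsilon^{\textbf{W}}(\mathbb{R}^d)$ shatters every $\delta$-separated set in the unit ball, and Theorem~\ref{main-theorem} with $\sigma=\max(0,x)$ piecewise linear gives $P(\textbf{W})\geq N/6$.

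The bulk of your proposal, however, is a fresh attempt at Theorem~\ref{main-theorem} itself, which is not required for the Corollary (the theorem may simply be cited) and which diverges from the paper with a genuine gap. Your plan is to extract $\asymp(1/\delta)^{\Omega(N)}$ ``independent'' $\delta$-separated configurations and argue that realizing a prescribed sign pattern on each forces a distinct cell of a parameter-space arrangement; you yourself flag that making this ``independence'' rigorous requires a quantitative gap/resultant estimate and leave it open, and it is not clear it can be pushed to the stated constant $N/6$. The paper sidesteps this entirely: it fixes a \emph{single} uniform grid $X_T=\{i/T\}_{i=0}^{T-1}$ with $T\asymp 1/\delta$, uses a sign-change lemma (Lemma~\ref{sign-change-lemma}) bounding by $M^*(\textbf{W},\sigma)$ the number of sign alternations any $f\in\Upsilon^{\textbf{W}}(\mathbb{R})$ can have, and observes that any one achievable sign pattern on $X_T$ can supply the alternating pattern for at most $\binom{M^*}{N/2}$ of the $\binom{T/2}{N/2}$ required $N/2$-element index sets. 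This yields the lower bound $|S(\textbf{W},T)|\geq (T/4M^*)^{N/2}$, which is then compared to Warren's upper bound; taking $T>e^{cN}$ in the piecewise-linear case gives $P\geq N/6$ with no configuration-independence or resultant machinery. Separately, your side remark that for piecewise-linear $\sigma$ a single shattered set already forces $P\gtrsim N$ via a ``$2^{O(P)}$'' Warren bound is incorrect: deep ReLU networks have VC-dimension of order $PL\log P$, so shattering one $N$-point set only gives $P\gtrsim N/(L\log P)$; this is precisely why the stronger ``shatter \emph{all} $\delta$-separated sets'' hypothesis, and the grid argument above, are needed to reach $\Omega(N)$.
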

\begin{proof}
    By translating the output, we may assume there exists $c_1,c_2\in C$ with $c_1 < 0 < c_2$. If arbitrary values from $C$ can be interpolated, then every dataset in $\mathcal{X}_N$ can be shattered. We now apply Theorem \ref{main-theorem} with $\sigma = \max(0,x)$, which is piecewise linear, to get the desired result.
\end{proof}
Finally, we apply these results to obtain lower bounds on the approximation rates of deep ReLU neural networks for Sobolev spaces at the embedding endpoint. Specifically, given a domain $\Omega\subset \mathbb{R}^d$, we define the $L_p$-norm on $\Omega$ by
\begin{equation}
	\|f\|_{L_p(\Omega)} = \left(\int_{\Omega}|f(x)|^pdx\right)^{1/p} < \infty.
\end{equation}
When $p=\infty$, this becomes $\|f\|_{L_\infty(\Omega)} = \esssup_{x\in \Omega} |f(x)|$. 

For positive integers $s$ and $1\leq q\leq \infty$, we defined the Sobolev space $W^s(L_q(\Omega))$ as follows (see \cite{demengel2012functional}, Chapter 2 or \cite{evans2010partial}, Chapter 5, for instance). A function $f\in L_q(\Omega)$ is in the Sobolev space $W^s(L_q(\Omega))$ if $f$ has weak derivatives of order $s$ and
\begin{equation}
	\|f\|^q_{W^s(L_q(\Omega))} := \|f\|^q_{L_q(\Omega)} + \sum_{|\alpha| = k} \|D^\alpha f\|^q_{L_q(\Omega)} < \infty.
\end{equation}
Here $\alpha = (\alpha_i)_{i=1}^d$ with $\alpha_i\in \mathbb{Z}_{\geq 0}$ is a multi-index and $|\alpha| = \sum_{i=1}^d \alpha_i$ is the total degree
and the standard modifications are made when $q = \infty$. We remark that it is also possible to define Sobolev spaces for fractional orders of smoothness $s$ (see \cite{di2012hitchhikers}), but we will not consider such spaces in this work. Sobolev spaces are centrally important function spaces in analysis and PDE theory \cite{evans2010partial} and a priori estimates for the solutions to PDEs are typically given in Sobolev norms. For this reason, it is an important problem to determine approximation rates for neural networks on Sobolev spaces.

To be precise, given an error norm $L_p(\Omega)$, we are interested in determining the minimax rate of approximation achieved by neural networks with a given architecure $\textbf{W}$ on the until ball of $W^s(L_q(\Omega))$, given by
\begin{equation}
    \sup_{\|f\|_{W^s(L_q(\Omega))} \leq 1} \left(\inf_{f_\textbf{W}\in \Upsilon^{\textbf{W}}(\mathbb{R}^d)} \|f - f_\textbf{W}\|_{L_p(\Omega)}\right).
\end{equation}
Recent remarkable results show that for very deep networks, i.e. where the depth $L$ is very large and $\textbf{W} = (w,...,w)$ for a fixed (sufficiently large) width $w$ in each layer, we have
\begin{equation}\label{superconvergence-rate}
    \inf_{f_\textbf{W}\in \Upsilon^{\textbf{W}}(\mathbb{R}^d)} \|f - f_\textbf{W}\|_{L_p(\Omega)} \leq C\|f\|_{W^s(L_q(\Omega))}P(\textbf{W})^{-2s/d},
\end{equation}
as long as the compact embedding condition
\begin{equation}\label{compact-embedding-condition}
    \frac{1}{q} - \frac{1}{p} < \frac{s}{d}
\end{equation}
is satisfied. Specifically, the result \eqref{superconvergence-rate} is established for $q = \infty$ when $0 < s\leq 1$ in \cite{yarotsky2018optimal,shen2022optimal} and for $q=\infty$ and $s > 1$ (up to logarithmic factors) in \cite{lu2021deep}. The full result covering all parameters $s,p$ and $q$ for which the compact embedding condition \eqref{compact-embedding-condition} holds was established in \cite{siegel2023optimal}. The extra factor $2$ in the exponent in \eqref{superconvergence-rate} is significantly better than classical methods of approximation and has been termed the \textit{super-approximation} of deep ReLU networks \cite{devore2021neural}. 

We consider neural network approximation at the embedding endpoint when \eqref{compact-embedding-condition} holds with equality, i.e. when
\begin{equation}
    \frac{1}{q} - \frac{1}{p} = \frac{s}{d},
\end{equation}
which is not covered by the previously mentioned results. In order for approximation to be possible at all we certainly need the embedding $W^s(L_q(\Omega)) \subset L_p$. It is known that such an embedding holds at the endpoint if $p < \infty$ or if $p = \infty$, $q = 1$, and $s = d$ \cite{demengel2012functional}. Utilizing our main Theorem \ref{main-theorem}, we show that if $p=\infty$, $q = 1$ and $s = d$, then super-approximation is no longer possible at the embedding endpoint even though an embedding holds. In particular, we have the following theorem.
\begin{theorem}\label{approximation-lower-bound-theorem}
    Let $\Omega\subset \mathbb{R}^d$ be a bounded domain.
    Then for any neural network architecture $\textbf{\upshape W}$ we have
    \begin{equation}
        \sup_{\|f\|_{W^d(L_1(\Omega))} \leq 1} \left(\inf_{f_\textbf{W}\in \Upsilon^{\textbf{W}}(\mathbb{R}^d)} \|f - f_\textbf{W}\|_{L_\infty(\Omega)}\right) \geq CP(\textbf{\upshape W})^{-1}
    \end{equation}
    for a constant $C$ depending only upon $d$.
\end{theorem}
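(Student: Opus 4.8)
The plan is to derive Theorem~\ref{approximation-lower-bound-theorem} from Theorem~\ref{main-theorem} by encoding an arbitrary sign pattern at $N\approx 6P(\textbf{W})$ points into a target function lying in the unit ball of $W^d(L_1(\Omega))$. The feature that makes the endpoint $s=d$, $q=1$ special is scale invariance of the relevant seminorm: fixing a bump $\phi\in C^\infty_c(\mathbb{R}^d)$ with $\phi(0)=1$ and $\operatorname{supp}\phi\subset B_{1/2}(0)$, one has $D^\alpha[\phi(\cdot/r)]=r^{-|\alpha|}(D^\alpha\phi)(\cdot/r)$, so $\|D^\alpha[\phi(\cdot/r)]\|_{L_1}=r^{d-|\alpha|}\|D^\alpha\phi\|_{L_1}$; for $|\alpha|=d$ this is independent of $r$, and for $|\alpha|<d$ it tends to $0$ as $r\to 0$. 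Hence, with $C_d:=\|\phi\|_{L_1(\mathbb{R}^d)}+\sum_{|\alpha|=d}\|D^\alpha\phi\|_{L_1(\mathbb{R}^d)}$, which depends only on $d$, we get $\|\phi(\cdot/r)\|_{W^d(L_1)}\le C_d$ for every $r\le 1$.

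First I would pass from $\Omega$ to the unit ball. Since $\Omega$ is open and nonempty, fix a closed ball $\overline{B_\rho(z)}\subset\Omega$. The class $\Upsilon^{\textbf{W}}(\mathbb{R}^d)$ is closed under precomposition with the affine map $u\mapsto z+\rho u$ (this only turns $A_{\textbf{M}_0,b_0}$ into $A_{\rho\textbf{M}_0,\,\textbf{M}_0 z+b_0}$), so if $\Upsilon^{\textbf{W}}$ realizes a prescribed sign pattern at the points $z+\rho u_1,\dots,z+\rho u_N$ then it realizes that pattern at $u_1,\dots,u_N$. Consequently it suffices to show that the hypothetical failure of the claimed bound forces $\Upsilon^{\textbf{W}}$ to shatter \emph{every} $\delta$-separated $N$-point configuration in the unit ball for some $\delta<e^{-cN}$, contradicting Theorem~\ref{main-theorem} (ReLU being piecewise linear).

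The main step is a contradiction argument. Assume $\sup_{\|f\|_{W^d(L_1(\Omega))}\le 1}\inf_{f_\textbf{W}\in\Upsilon^{\textbf{W}}}\|f-f_\textbf{W}\|_{L_\infty(\Omega)}<(7C_d)^{-1}P^{-1}$, where $P:=P(\textbf{W})\ge 1$. Put $N:=\lfloor 6P\rfloor+1$, so $P<N/6$ and $N\le 7P$, and let $c$ be the piecewise-linear constant of Theorem~\ref{main-theorem}. Let $\{u_1,\dots,u_N\}\subset\{|u|\le1\}$ be any configuration with pairwise separation at least $\delta$, where $\delta<e^{-cN}$ is small enough that, with $x_i:=z+\rho u_i$ and $r:=\rho\delta/4$, one has $r\le1$ and the balls $\overline{B_{r/2}(x_i)}$ are pairwise disjoint and contained in $\Omega$. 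Given signs $\epsilon_i\in\{\pm1\}$, set $a:=(NC_d)^{-1}$ and $g:=\sum_{i=1}^N \epsilon_i\,a\,\phi((\,\cdot\,-x_i)/r)\in C^\infty_c(\Omega)$. Because the summands have disjoint supports and $q=1$, $\|g\|_{W^d(L_1(\Omega))}=\sum_{i=1}^N\|a\,\phi((\cdot-x_i)/r)\|_{W^d(L_1)}\le aNC_d=1$, while $g(x_j)=\epsilon_j a$ for each $j$. By the assumed bound there is $f_\textbf{W}\in\Upsilon^{\textbf{W}}(\mathbb{R}^d)$ with $\|g-f_\textbf{W}\|_{L_\infty(\Omega)}<(7C_dP)^{-1}\le a$, hence $|f_\textbf{W}(x_i)-\epsilon_i a|<a$, which forces $\sign(f_\textbf{W}(x_i))=\epsilon_i$ for all $i$. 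Precomposing $f_\textbf{W}$ with $u\mapsto z+\rho u$ shows $\Upsilon^{\textbf{W}}$ realizes the pattern $(\epsilon_i)$ at $(u_i)$; as the configuration and signs were arbitrary, $\Upsilon^{\textbf{W}}$ shatters every $\delta$-separated $N$-point subset of the unit ball. Since $\delta<e^{-cN}$, Theorem~\ref{main-theorem} gives $P\ge N/6>P$, a contradiction, so the bound holds with $C=(7C_d)^{-1}$.

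I expect the one point requiring genuine care to be the norm bookkeeping for $g$: one must check that superposing $N$ unit-height bumps of arbitrarily small radius yields a $W^d(L_1)$ norm of order exactly $N$ — it neither blows up like $r^{-d}$ (as it would away from the endpoint, which is precisely why the lower bound fails there) nor decays — so that the encoded amplitude is of order $1/N$, i.e. of order $1/P$. The remaining ingredients — affine invariance of the network class, fitting the tiny disjoint bumps inside $\Omega$, and noting that $C_d$, hence $C$, depends only on $d$ (the ball $\overline{B_\rho(z)}$ affects only the harmless radius $r$) — are routine.
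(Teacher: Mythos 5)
Your proposal is correct and takes essentially the same route as the paper: exploit the scale invariance of the top-order $W^d(L_1)$ seminorm to pack $N\approx 6P$ disjoint bumps of height $\sim 1/N$ encoding a sign pattern, then invoke Theorem~\ref{main-theorem} with $\sigma=\mathrm{ReLU}$. Your write-up is in fact slightly more careful than the paper's at two small points: you make explicit the affine precomposition that moves the unit-ball configuration of Theorem~\ref{main-theorem} into a ball inside $\Omega$ (the paper just says ``using that $\Omega$ is open''), and you correctly state the scaling as an inequality $\|\phi(\cdot/r)\|_{W^d(L_1)}\le C_d$ for $r\le 1$ rather than the paper's equality, since the lower-order $L_1$ term decays like $r^d$.
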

It is an interesting question whether this can be extended to the case where $p < \infty$.

The paper is organized as follows. First, in Section \ref{main-theorem-proof-section} we prove Theorem \ref{main-theorem}. Then in Section \ref{approximation-lower-bounds-section} we prove Theorem \ref{approximation-lower-bound-theorem} based upon this result.

\section{Proof of the Main Result}\label{main-theorem-proof-section}
In this section we prove Theorem \ref{main-theorem}. Observe that it suffices to prove the Theorem in the case $d=1$. This follows by restricting the input of the network to the $x$-axis, which can only reduce the number of parameters. If every subset of $\mathbb{R}^d$ satisfying the assumptions of Theorem \ref{main-theorem} is shattered, then so is every subset of the $x$-axis, and the one-dimensional result gives the desired lower bound. 

The first ingredient is a Lemma which restricts the number of sign changes that a function $f\in \Upsilon^{\textbf{W}}(\mathbb{R})$ can have. The Lemma and proof follow closely the ideas of \cite{telgarsky2016benefits} and of Theorem 16 in \cite{bartlett2019nearly}.

\begin{lemma}\label{sign-change-lemma}
    Let $\sigma$ be a piecewise polynomial activation function with $q$ pieces and degree at most $r$ in each piece. Suppose that $\textbf{\upshape W} = (w_1,...,w_L)$ is a vector representing the widths of each layer of a deep neural network architecture. Let $f\in \Upsilon^{\textbf{\upshape W}}(\mathbb{R})$ and $x_0 < x_1 < \cdots < x_M$ be a sequence of real numbers such that $\sign(f(x_i)) \neq \sign(f(x_{i+1}))$ for $i=0,...,M-1$. Then
    \begin{equation}
        M \leq M^*(\textbf{\upshape W},\sigma) := ((r^L+2)q^Lr^{L(L+1)/2}\prod_{i=1}^L w_i.
    \end{equation}
\end{lemma}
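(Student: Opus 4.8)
The plan is to establish that every $f\in\Upsilon^{\textbf{W}}(\mathbb{R})$ is a piecewise polynomial function on $\mathbb{R}$ whose number of pieces and whose degree are controlled by $\textbf{W}$ and $\sigma$, and then to bound the number of sign changes of such a function. Write $f = A_{\textbf{M}_L,b_L}\circ\sigma\circ A_{\textbf{M}_{L-1},b_{L-1}}\circ\cdots\circ\sigma\circ A_{\textbf{M}_0,b_0}$, and for $0\le k\le L$ let $g_k$ be the partial composition ending with the $k$-th application of $\sigma$ (with $g_0:=A_{\textbf{M}_0,b_0}$), so that $f=A_{\textbf{M}_L,b_L}\circ g_L$. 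I would prove by induction on $k$ that there are numbers $t_k$ and $\delta_k$ such that $\mathbb{R}$ can be partitioned into at most $t_k$ intervals on each of which every coordinate of $g_k$ coincides with a polynomial of degree at most $\delta_k$; one may take $t_0=1$ and $\delta_0=1$.

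For the inductive step, applying the affine map $A_{\textbf{M}_k,b_k}$ to the output of $g_k$ preserves both $t_k$ and $\delta_k$ (affine combinations of polynomials of degree $\le\delta_k$ on a common partition remain such), so it suffices to understand the effect of applying $\sigma$ coordinatewise. On a fixed interval of the current partition, a fixed coordinate is a polynomial $q$ with $\deg q\le\delta_k$, and $\sigma\circ q$ is again piecewise polynomial, its breakpoints occurring only where $q$ attains one of the at most $p-1$ abscissae at which $\sigma$ switches polynomial pieces; since each equation $q(x)=c$ has at most $\delta_k$ solutions, at most $w_{k+1}(p-1)\delta_k$ new breakpoints are introduced inside that interval over all $w_{k+1}$ coordinates, and on each resulting sub-interval the coordinates have degree at most $d\,\delta_k$. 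This yields $t_{k+1}\le t_k\bigl(1+w_{k+1}(p-1)\delta_k\bigr)\le p\,w_{k+1}\,\delta_k\,t_k$ and $\delta_{k+1}\le d\,\delta_k$, hence $\delta_k\le d^{\,k}$ and, telescoping, $t_L\le p^L\bigl(\prod_{i=1}^{L}w_i\bigr)d^{\,L(L-1)/2}$. The final affine map $A_{\textbf{M}_L,b_L}$ preserves these bounds, so $f$ is piecewise polynomial with at most $T:=p^L\bigl(\prod_{i=1}^{L}w_i\bigr)d^{\,L(L-1)/2}$ pieces, each of degree at most $D:=d^{\,L}$.

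Finally I would convert this into a sign-change count. On any interval where $f$ agrees with a polynomial of degree $\le D$, that polynomial has at most $D$ real zeros, so the interval decomposes into at most $2D+1$ maximal sub-intervals (possibly degenerate) on which $\sign(f)$ is constant; hence $\mathbb{R}$ is partitioned into at most $T(2D+1)$ such intervals, which are linearly ordered. If $x_0<\cdots<x_M$ satisfy $\sign(f(x_i))\ne\sign(f(x_{i+1}))$ for all $i$, then no two consecutive $x_i$ can lie in the same constant-sign interval, and since the $x_i$ increase and the intervals are ordered, the index of the interval containing $x_i$ is strictly increasing in $i$; therefore $M+1\le T(2D+1)$. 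Plugging in $T$ and $D$, using $L(L-1)/2+L=L(L+1)/2$ together with the crude bounds $2D+1\le 3D$ and $3\le d^L+2$, gives $M\le M^*(\textbf{W},\sigma)$. The only step requiring genuine care is the degree bookkeeping for a general (not piecewise-linear) activation — the fact that degrees multiply under composition, which produces the $d^{\,L(L+1)/2}$ factor, together with the estimate that the preimage of a point under a degree-$\delta$ polynomial has at most $\delta$ elements; for piecewise-linear $\sigma$ the degree stays equal to $1$ throughout and the argument simplifies. This is essentially the region-counting argument of \cite{telgarsky2016benefits} and \cite{bartlett2019nearly}, with constants adapted to the present normalization.
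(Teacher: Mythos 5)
Your argument is correct and follows essentially the same route as the paper's: induct on layers to show the network is piecewise polynomial with explicitly bounded piece count and degree (the Telgarsky/Bartlett et al. region-counting scheme), then convert the piece/degree bound into a cap on the number of constant-sign intervals, and hence on sign alternations. A few small remarks: your inductive bookkeeping has a harmless off-by-one in the base step (the passage from $g_0$ to $g_1$ involves no intervening affine map), but the recursion $t_{k+1}\le p\,w_{k+1}\delta_k t_k$, $\delta_{k+1}\le d\,\delta_k$ still holds for all $k$. Your piece count $t_L\le p^L\bigl(\prod w_i\bigr)d^{L(L-1)/2}$ is in fact tighter by a factor $d^L$ than the paper's $P_L=p^L d^{L(L+1)/2}\prod w_i$, which offsets your slightly cruder per-piece estimate of $2D+1$ constant-sign subintervals (the sharp count for a degree-$D$ polynomial under the convention $\sign(0)=1$ is $D+1$, which is what the paper uses). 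Either way both arguments land comfortably under the stated $M^*$. Your handling of discontinuous $\sigma$ is also fine: treating each of the $T$ pieces independently already absorbs the extra sign switches that can occur across piece boundaries, which the paper accounts for by the additive $+P$ term.
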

\begin{proof}
    The proof essentially follows from repeated application of Lemma 15 in \cite{bartlett2019nearly}. Specifically, consider the output of the network after the $k$-th layer, which is given by
    \begin{equation}
        f_k(x) := A_{\textbf{M}_k,b_k} \circ \sigma \circ A_{\textbf{M}_{k-1},b_{k-1}} \circ \sigma \circ \cdots \circ \sigma \circ A_{\textbf{M}_1,b_1} \circ \sigma \circ A_{\textbf{M}_0,b_0}(x).
    \end{equation}
    We prove by induction that each component of $f_k$ is a piecewise polynomial function with at most $Q_k$ pieces, each of degree at most $R_k$, where
    \begin{equation}
        Q_k = q^kr^{k(k+1)/2}\prod_{i=1}^k w_i,~R_k = r^k.
    \end{equation}
    This evidently holds when $k=0$, since each component of $f_0$ is an affine linear function, which has one piece of degree one. For the inductive step, we first note that if the $i$-th component $f_k^i$ has $\leq Q_k$ pieces of degree $\leq R_k$, then $\sigma(f_k^i)$ has at most $qQ_kR_k$ pieces of degree $\leq rR_k$. 
    
    Indeed, let $b_1,...,b_{p-1}$ denote the breakpoints of $\sigma$. On each if its at most $Q_k$ pieces, $f_k^i$ is a polynomial of degree at most $D_k$. This means that there can be at most $D_k$ solutions to $f_k^i(x) = b_j$ for $j=1,...,p-1$ on each of these pieces. Hence applying $\sigma$ divides each piece of $f_k^i$ into at most $qR_k$ further sub-pieces. On each of these sub-pieces $\sigma(f_k^i)$ is the composition of a polynomial of degree $\leq R_k$ with a polynomial of degree at most $r$ and so has degree at most $rR_k$.

    Thus, each component of $f_{k+1}$ is a linear combination of at most $w_{k+1}$ piecewise polynomials with $\leq qQ_kR_k$ pieces of degree $\leq rR_k$. Taking the union of their breakpoints, we see that this can have at most $w_kqQ_kR_k$ pieces of degree $\leq rR_k$. This completes the inductive step and implies that $f$ has at most $Q$ pieces of degree at most $R$, where
    \begin{equation}
        Q = q^Lr^{L(L+1)/2}\prod_{i=1}^L w_i,~R = r^L.
    \end{equation}
    On each of these pieces, the function $f$ is a polynomial of degree $R$ and thus can switch sign at most $R+1$ times. In addition, moving from one piece to the next the sign can switch at most an additional $Q$ times (note that this only needs to be counted if $\sigma$ is not continuous). This gives a total number of sign changes which is at most
    \begin{equation}
        M \leq Q(R+1) + Q = (r^L+2)q^Lr^{L(L+1)/2}\prod_{i=1}^L w_i.
    \end{equation}
\end{proof}

Note that in the important special case where $\sigma = \max(0,x)$ is the ReLU activation function \cite{nair2010rectified} the bound in Lemma \ref{sign-change-lemma} reduces to
$$
M^*(\textbf{\upshape W},\sigma) \leq 3\prod_{i=1}^L(2w_i),
$$
which has been shown to be essentially sharp in \cite{telgarsky2016benefits}. In particular, in \cite{telgarsky2016benefits} a network with small fixed width $w$ and depth $L = O(m)$ is constructed which represents the function $f(x) = x\pmod 2$ for $x=0,...,2^m-1$. Thus, for this family of networks we have
$$
\log(M) \geq c\log(M^*(\textbf{\upshape W},\sigma))
$$
for a fixed constant $c$.

\begin{proof}[Proof of Theorem \ref{main-theorem}]
Assume without loss of generality that $N$ is even. Let $T \geq 2N$ be an even integer and consider the set of points 
\begin{equation}
 X_T = \{i/T,~i=0,...,T-1\}\subset [0,1].
\end{equation}
Note that every $N$-element subset of $X_T$ satisfies the assumptions of Theorem \ref{main-theorem} with $\delta = T^{-1}$.

Let $S(\textbf{W},T)$ denote the set of sign patterns that $\Upsilon^{\textbf{W}}(\mathbb{R})$ achieves on the large set $X_T$, i.e. 
\begin{equation}
    S(\textbf{W},T) := \{(\epsilon_i)_{i=0}^{T-1},~\exists f\in \Upsilon^{\textbf{W}}(\mathbb{R})~\text{s.t.}~\sign(f(i/T)) = \epsilon_i\}.
\end{equation}
Suppose that $\Upsilon^{\textbf{W}}(\mathbb{R})$ shatters every subset $X\subset X_T$ with $|X| = N$. We claim that this implies
\begin{equation}\label{sign-pattern-lower-bound}
    |S(\textbf{W},T)| \geq \left(\frac{T}{4M^*(\textbf{W},\sigma)}\right)^{N/2}.
\end{equation}
Indeed, let $I \subset \{0,...,T/2-1\}$ be an arbitrary subset of size $|I| = N/2$ and consider the set 
$$X_I = \left\{\frac{2i}{T}\right\}_{i\in I}\cup \left\{\frac{2i+1}{T}\right\}_{i\in I}$$ 
and the sign pattern on $X_I$ given by
\begin{equation}
    \epsilon_I\left(\frac{2i}{T}\right) = 1,~\epsilon_I\left(\frac{2i+1}{T}\right) = -1
\end{equation}
for $i\in I$. Since the set $X_I$ is shattered by $\Upsilon^{\textbf{W}}(\mathbb{R})$, the sign pattern $\epsilon_I$ can be matched by an $f\in \Upsilon^{\textbf{W}}(\mathbb{R})$, so there must exist an $\epsilon\in S(\textbf{W},T)$ such that
\begin{equation}\label{epsilon-I-equation}
    \epsilon(2i) = 1,~\epsilon(2i + 1) = -1
\end{equation}
for all $i\in I$. 

For each fixed sign pattern $\epsilon\in S(\textbf{W},T)$ let $J_\epsilon$ denote the set of indices for which $\epsilon(2i) = 1$ and $\epsilon(2i+1) = -1$, i.e.
\begin{equation}
    J_\epsilon = \{i:~\epsilon(2i) = 1~\text{and}~\epsilon(2i+1) = -1\} \subset \{0,...,T/2-1\}.
\end{equation}
Note that any subset $I\subset \{0,...,T/2-1\}$ for which $\epsilon$ satisfies \eqref{epsilon-I-equation} satisfies $I\subset J_\epsilon$ by definition. On the other hand, Lemma \ref{sign-change-lemma} implies that $|J_\epsilon| \leq M^*(\textbf{W},\sigma)$. This means that the number of subsets $I$ of size $|I| = N/2$ for which $\epsilon$ satisfies \eqref{epsilon-I-equation} is bounded by
\begin{equation}\label{sign-matching-upper-bound}
    \binom{M^*(\textbf{W},\sigma)}{N/2} \leq \frac{M^*(\textbf{W},\sigma)^{N/2}}{(N/2)!}.
\end{equation}
On the other hand, the total number of subsets $I\subset \{0,...,T/2-1\}$ of size $|I| = N/2$ is
\begin{equation}
    \binom{T/2}{N/2} \geq \frac{T^{N/2}}{4^{N/2}(N/2)!},
\end{equation}
since $T \geq 2N$. Since by assumption $\epsilon_I$ is matched by an $f\in \Upsilon^{\textbf{W}}(\mathbb{R})$ for every subset $I\subset \{0,...,T/2-1\}$ of size $N/2$ and the number of $\epsilon_I$ that each $\epsilon\in S(\textbf{W},T)$ can match is bounded by \eqref{sign-matching-upper-bound}, we get the lower bound \eqref{sign-pattern-lower-bound}.

Next, we upper bound $|S(\textbf{W},T)|$ using Warren's Theorem \cite{warren1968lower}. Let $v_1 < v_2 < \cdots < v_{q-1}$ denote the break points of the activation function $\sigma$ and $z_1,...,z_q$ be the polynomials in each piece. Specifically, setting $b_0 = -\infty$ and $b_q = \infty$, $\sigma$ is given by
\begin{equation}
    \sigma(x) = \begin{cases}
        z_i(x) & v_{i-1} \leq x < v_i.
    \end{cases}
\end{equation}
Note that here we assume that $\sigma$ is left continuous, but changing the value of $\sigma$ at the break points $b_i$ (if it is discontinuous) doesn't significantly change the proof.

For $i=1,...,L$, let $\textbf{t}_i\in \{1,...,q\}^{w_i}$, and for $\textbf{t}\in \{1,...,q\}^{w_i}$ define a function $S^i_{\textbf{t}}:\mathbb{R}^{w_i}\rightarrow \mathbb{R}^{w_i}$ by
\begin{equation}
    S^i_{\textbf{t}}(x)_j = z_{\textbf{t}_j}(x_j)
\end{equation}
for $j=1,...,w_i$. For indices $\textbf{t}$, the function $S^i_\textbf{t}$ applies the pieces of $\sigma$ indicated by $\textbf{t}$ to the corresponding entries of the input $x\in \mathbb{R}^{w_i}$.

 Given an input $x\in X_T$, consider the signs of the following quantities
    \begin{equation}\label{sign-collection-small-M}
    \begin{split}
        (A_{\textbf{M}_0,b_0}(x))_j - v_k,&~j=1,...,w_1,\\
        &~k = 1,...,q-1\\
        (A_{\textbf{M}_1,b_1} \circ \textbf{t}_1 \circ A_{\textbf{M}_0,b_0}(x))_j - v_k,&~j=1,...,w_2,\\
        &~k = 1,...,q-1\\
        (A_{\textbf{M}_2,b_2} \circ \textbf{t}_2 \circ A_{\textbf{M}_1,b_1} \circ \textbf{t}_1 \circ A_{\textbf{M}_0,b_0}(x))_j - v_k,&~j=1,...,w_3,\\
        &~k = 1,...,q-1\\
        &\vdots\\
        (A_{\textbf{M}_{L-1},b_{L-1}} \circ \textbf{t}_{L-1} \circ \cdots \circ \textbf{t}_2 \circ A_{\textbf{M}_1,b_1} \circ \textbf{t}_1 \circ A_{\textbf{M}_0,b_0}(x))_j-v_k,&~j=1,..,w_L,\\
        &~k=1,...,q-1\\
        A_{\textbf{M}_L,b_L} \circ \textbf{t}_L \circ A_{\textbf{M}_{L-1},b_{L-1}} \circ \textbf{t}_{L-1} \circ \cdots \circ \textbf{t}_2 \circ A_{\textbf{M}_1,b_1} \circ \textbf{t}_1 \circ A_{\textbf{M}_0,b_0}(x),&
    \end{split}
    \end{equation}
    where the $\textbf{t}_i$ range over all elements of $\{1,...,q\}^{w_i}$ for $i=1,...,L$.
    
    Given $x\in X_T$ and parameters $\textbf{P}:=\{\textbf{M}_0,...,\textbf{M}_L,b_1,...,b_L\}$ the sign of $f_{\textbf{P}}(x)$, where $f_{\textbf{P}}$ is the neural network function defined by the parameters $\textbf{P}$, is uniquely determined by the signs of the quantities in \eqref{sign-collection-small-M}. Indeed, we recursively set $\textbf{t}_i$ to index the pieces of $\sigma$ which contain the outputs of the neurons at layer $i$, i.e. we set (recursively in $i$)
    \begin{equation}
    \begin{split}
        (\textbf{t}_i&)_j = \\
        &\min\{k:\sign((A_{\textbf{M}_{i-1},b_{i-1}} \circ \textbf{t}_{i-1} \circ \cdots \circ \textbf{t}_2 \circ A_{\textbf{M}_1,b_1} \circ \textbf{t}_1 \circ A_{\textbf{M}_0,b_0}(x))_j - v_k) = -1\}\cup\{q\}
    \end{split}
    \end{equation}
    for $j=1,...,w_i$. Then
    \begin{equation}
        \sign(f_{\textbf{P}}(x)) = \sign(A_{\textbf{M}_L,b_L} \circ \textbf{t}_L \circ A_{\textbf{M}_{L-1},b_{L-1}} \circ \textbf{t}_{L-1} \circ \cdots \circ \textbf{t}_2 \circ A_{\textbf{M}_1,b_1} \circ \textbf{t}_1 \circ A_{\textbf{M}_0,b_0}(x)).
    \end{equation}
    Now, for each fixed $x\in X_T$ and set of $\textbf{t}_i\in \{1,...,q\}^{w_i}$, each of the quantities in \eqref{sign-collection-small-M} is a polynomial of degree at most $$r^L + r^{L-1} + \cdots + r + 1 \leq (L+1)r^{L},$$ where $r = \max\{\text{deg}(z_i)\}_{i=1}^q$, in the parameters $\textbf{P}$. Ranging over all $x\in X_T$, $\textbf{t}_i\in \{1,...,q\}^{w_i}$, and all quantities in \eqref{sign-collection-small-M}, we see that the sign pattern $\sign(f_{\textbf{P}}(x))_{x\in X_T}$ is uniquely determined by the signs of (setting $V = \sum_{i=1}^Lw_i$)
    $$
        Tq^{V}\left(1+(q-1)V\right)
    $$
    polynomials of degree $(L+1)r^{L}$ in the $P = P(\textbf{W})$ parameters $\textbf{P}$. We now apply Warren's Theorem (Theorem 3 in \cite{warren1968lower}) to see that the number of such sign patterns, and thus $|S(\textbf{W},T)|$, is bounded by
    \begin{equation}
    \begin{split}
    |S(\textbf{W},T)| &\leq \left(\frac{4e(L+1)r^{L}Tq^{V}\left(1+(q-1)V\right)}{P}\right)^{P}\\
    &\leq \left(4e(L+1)r^{L}Tq^{V}\left(1+(q-1)V\right)\right)^{P}.
    \end{split}
    \end{equation}
    We take logarithms and compare with the lower bound \eqref{sign-pattern-lower-bound} to get
    \begin{equation}\label{fundamental-bound}
        \frac{N}{2}(\log(T) - \log(4M^*(\textbf{W},\sigma))) \leq P(\log(T) + \log(4e(L+1)r^{L}q^{V}(1+(q-1)V)).
    \end{equation}
    We complete the proof of Theorem \ref{main-theorem} by means of a contradiction. Suppose that $P < N/6$ and observe that by Lemma \ref{sign-change-lemma}
    \begin{equation}\label{bound-on-sign-pattern-changes}
        \log(4M^*(\textbf{W},\sigma)) \leq C\left(\sum_{i=1}^Lw_i + L^2\log(r) + L\log(q)\right) \leq C(\sigma)P^2,
    \end{equation}
    since we clearly have that $\sum_{i=1}^Lw_i \leq P$ and $L \leq P$. In addition, we easily calculate that
    \begin{equation}
        \log(4e(L+1)r^{L}q^{V}(1+(q-1)V)) \leq C(\sigma)P.
    \end{equation}
    This means that if we choose $T > e^{cN^2} > e^{cP^2}$ (recall that by assumption $P < N/6$) for a constant $c$ depending upon $\sigma$, then we can ensure that
    \begin{equation}
        \log(T) \geq 2\max(\log(4M^*(\textbf{W},\sigma)),~\log(4e(L+1)r^{L}q^{V}(1+(q-1)V))).
    \end{equation}
    But then \eqref{fundamental-bound} becomes
    \begin{equation}
        \frac{N}{4}\log(T) \leq \frac{3P}{2}\log(T),
    \end{equation}
    which implies $P \geq N/6$, contradicting our assumption.

    If $\sigma$ is piecewise linear, then the bound \eqref{bound-on-sign-pattern-changes} can be improved to (since $d = 1$)
    \begin{equation}
        \log(4M^*(\textbf{W},\sigma))  \leq C(\sigma)P,
    \end{equation}
    so that we can take $T > e^{cN} > e^{cP}$ in the above argument. Since $\delta = T^{-1}$, this completes the proof.

\end{proof}

\section{Proof of Approximation Lower Bounds}\label{approximation-lower-bounds-section}
In this section we prove Theorem \ref{approximation-lower-bound-theorem}. The proof is an application of the fact that deep ReLU neural networks cannot shatter an arbitrary set of $N$ points with $o(N)$ parameters. 
\begin{proof}[Proof of Theorem \ref{approximation-lower-bound-theorem}]
     Let $\textbf{W}$ be an arbitrary fixed neural networks architecture. Then by Theorem \ref{main-theorem}, there exists a set of points $X = \{x_1,...,x_N\}\subset \Omega$ (using that $\Omega$ is open) with $N \leq 6P(\textbf{W})$ such that $\Upsilon^{\textbf{W}}(\mathbb{R}^d)$ fails to shatter $X$. Thus we can find a set of signs $\epsilon_1,...,\epsilon_N$ which cannot be matched at the points in $X$.

     Fix a smooth function $\phi:\mathbb{R}^d\rightarrow \mathbb{R}$ such that $\phi(0) = 1$ and $\phi(x) = 0$ if $|x| \geq 1$. We observe that since $\phi$ is smooth we certainly have $\|\phi\|_{W^d(L_1(\mathbb{R}^d))} = C_d < \infty$. For $\lambda > 0$ define the function
     \begin{equation}
         \phi_\lambda(x) = \phi(\lambda^{-1} x).
     \end{equation}
     From the scaling of the Sobolev norms, we readily see that
     \begin{equation}
        \|\phi_\lambda\|_{W^d(L_1(\mathbb{R}^d))} \leq  \|\phi\|_{W^d(L_1(\mathbb{R}^d))} = C_d
     \end{equation}
     for any $0 < \lambda \leq 1$ (namely, the $W^d(L_1)$ semi-norm is preserved, while the lower order Sobolev semi-norms decrease as long as $\lambda \leq 1$).

     We now choose $0 < \lambda \leq 1$ small enough so that any two distinct points $x_i,x_j\in X$ satisfy $|x_i - x_j| > 2\lambda$. This guarantees that the functions $\{\phi_\lambda(x - x_i)\}_{i=1}^N$ have pairwise disjoint supports. Consider the function
     \begin{equation}\label{definition-of-f}
         f(x) := \frac{1}{C_dN}\sum_{i=1}^N \epsilon_i\phi_\lambda(x - x_i).
     \end{equation}
     We have $\|f\|_{W^d(L_1(\mathbb{R}^d))} \leq 1$. 
     
     Moreover, from the choice of signs $\epsilon_i$ we see that given any $f_{\textbf{W}}\in \Upsilon^{\textbf{W}}(\mathbb{R}^d)$, there exists a point $x_i$ such that $$\sign(f_{\textbf{W}}(x_i)) \neq \epsilon_i.$$
     Since all of the terms in the sum in \eqref{definition-of-f} have disjoint support, this means that
     \begin{equation}
         |f_{\textbf{W}}(x_i) - f(x_i)| = \left|f_{\textbf{W}}(x_i) - \frac{\epsilon_i}{C_dN}\right| \geq \frac{C_d}N^{-1}.
     \end{equation}
     Since $N\leq 6P(\textbf{W})$ and $f_\textbf{W}$ was arbitrary, this completes the proof.
\end{proof}

\section{Acknowledgements}
We would like to thank Ronald DeVore, Jinchao Xu, and Juncai He for helpful discussions on this topic, and Qihang Zhou for helpful comments on an earlier version of the manuscript. This work was supported by the National Science Foundation (DMS-2111387 and CCF-2205004) as well as the MURI ONR grant N00014-20-1-2787.

\bibliographystyle{amsplain}
\bibliography{refs}
\end{document}